
\documentclass[preprint,12pt]{elsarticle}




\usepackage{amssymb}
\usepackage{amsmath,amsfonts}
\usepackage{array}
\usepackage[caption=false,font=normalsize,labelfont=sf,textfont=sf]{subfig}
\usepackage{textcomp}
\usepackage{stfloats}
\usepackage{url}
\usepackage{verbatim}
\usepackage{graphicx}
\usepackage{amsthm}
\usepackage{floatrow}
\usepackage{wrapfig,lipsum,booktabs}

\newtheorem{definition}{Definition}
\newtheorem{corollary}{Corollary}
\newtheorem{observation}{Observation}
\usepackage{algpseudocode}
\usepackage{algorithm}
\usepackage{multirow}
\usepackage{mwe}
\usepackage{cancel}
\usepackage{comment}
\usepackage{xcolor}
\usepackage{colortbl}
\captionsetup[table]{skip=5pt}
\newcommand{\blockcomment}[1]{}
\usepackage{pifont}
\newcommand{\cmark}{\ding{51}}%
\newcommand{\xmark}{-} 

\usepackage{float}
\usepackage{paralist, tabularx}

\floatstyle{plaintop}
\restylefloat{table}

\def\BibTeX{{\rm B\kern-.05em{\sc i\kern-.025em b}\kern-.08em
    T\kern-.1667em\lower.7ex\hbox{E}\kern-.125emX}}
\usepackage{balance}

\usepackage{lineno}

\journal{Neurocomputing}

\begin{document}
\begin{frontmatter}



\title{Generalized Mask-aware IoU for Anchor Assignment for Real-time
Instance Segmentation}
 
 \affiliation[tu_darmstadt]{organization={Dept. of Computer Science, TU Darmstadt},
             addressline={Hochschulstraße 10},
             city={Darmstadt},
             postcode={64289},
             state={Hesse},
             country={Germany}}

 \affiliation[five_ai]{organization={Five AI Ltd.},
             addressline={Kett House, Station Road},
             city={Cambridge},
             postcode={CB1 2JH},
             country={United Kingdom}}
  \affiliation[metu_ceng]{organization={Dept. of Computer Engineering, METU},
             addressline={Üniversiteler,  Çankaya},
             city={Ankara},
             postcode={06800},
             country={Türkiye}}
   \affiliation[ligm]{organization={LIGM, Ecole des Ponts, Univ Gustave Eiffel, CNRS},
             addressline={5 Boulevard Descartes-Champs s/ Marne},
             city={Marne-la-vallée Cedex 2},
             postcode={77454},
             country={France}}
    \affiliation[metu_romer]{organization={Center for Robotics and Artificial Intelligence (ROMER), METU},
             addressline={Üniversiteler,  Çankaya},
             city={Ankara},
             postcode={06800},
             country={Türkiye}}


 \author[tu_darmstadt,metu_ceng]{Baris Can Cam\corref{contrib}}
 \ead{bariscancam@gmail.com}
 \author[five_ai]{Kemal Oksuz\corref{contrib}}
 \author[metu_ceng]{Fehmi Kahraman}
 \author[ligm]{Zeynep Sonat Baltaci}
 \author[metu_ceng,metu_romer]{Sinan Kalkan\corref{sencontrib}}
 \author[metu_ceng,metu_romer]{Emre Akbas\corref{sencontrib}}
\cortext[contrib]{Equal contribution.}
\cortext[sencontrib]{Equal contribution for senior authorship.}

\begin{abstract}
This paper introduces Generalized Mask-aware Intersection-over-Union (GmaIoU) as a new measure for positive-negative assignment of anchor boxes during training of instance segmentation methods. Unlike conventional IoU measure or its variants, which only consider the proximity of anchor and ground-truth boxes; GmaIoU additionally takes into account the segmentation mask. This enables GmaIoU to provide more accurate supervision during training. We demonstrate the effectiveness of GmaIoU by replacing IoU with our GmaIoU in ATSS, a state-of-the-art (SOTA) assigner. Then, we train YOLACT, a real-time instance segmentation method, using our GmaIoU-based ATSS assigner. The resulting YOLACT based on the GmaIoU assigner outperforms (i) ATSS with IoU by $\sim 1.0-1.5$ mask AP, (ii) YOLACT with a fixed IoU threshold assigner by $\sim 1.5-2$ mask AP over different image sizes and (iii) decreases the inference time by $25 \%$ owing to using less anchors. Taking advantage of this efficiency, we further devise GmaYOLACT, a faster and $+7$ mask AP points more accurate detector than YOLACT. Our best model achieves $38.7$ mask AP at $26$ fps on COCO test-dev establishing a new state-of-the-art for real-time instance segmentation.
\end{abstract}
\begin{keyword}
Deep Learning\sep Instance Segmentation\sep Anchor Assignment\sep Intersection-Over-Union
\end{keyword}
\end{frontmatter}
\section{Introduction}
\label{sec:Introduction}

Instance segmentation is a visual detection problem, where the goal is to detect and classify object instances by locating them using pixel-level segmentation masks. Instance segmentation task inherently involves the object detection task \cite{ObjectDetectionArXiv, ObjectDetectionIJCV, ObjectDetectionPAMI, ObjectDetectionNeurocomp}, which categorizes objects within an image while outlining their spatial location by using bounding-box representations. To handle the diversity of objects appearing at various locations, scales and numbers,  instance segmentation methods \cite{yolact,MaskRCNN,maskscoring,yolact-plus} commonly employ a dense set of object hypotheses, called anchors, to ensure a maximum coverage for objects. Anchors are typically represented by boxes of varying sizes and aspect ratios.  
To maximize coverage, a large number of anchors (e.g., $\sim 20k$ per image in YOLACT \cite{yolact} for images of size $550 \times 550$) need to be assigned to ground truths boxes. An anchor that is assigned to a ground-truth box is considered a positive example, and anchors that are not assigned to any ground-truth box are called negative examples. The process of matching anchors with ground-truth boxes is known as the \textit{assignment problem} \cite{ATSS,paa}.

\begin{figure}[hbt!]
    \centerline{
        \includegraphics[width=\textwidth]{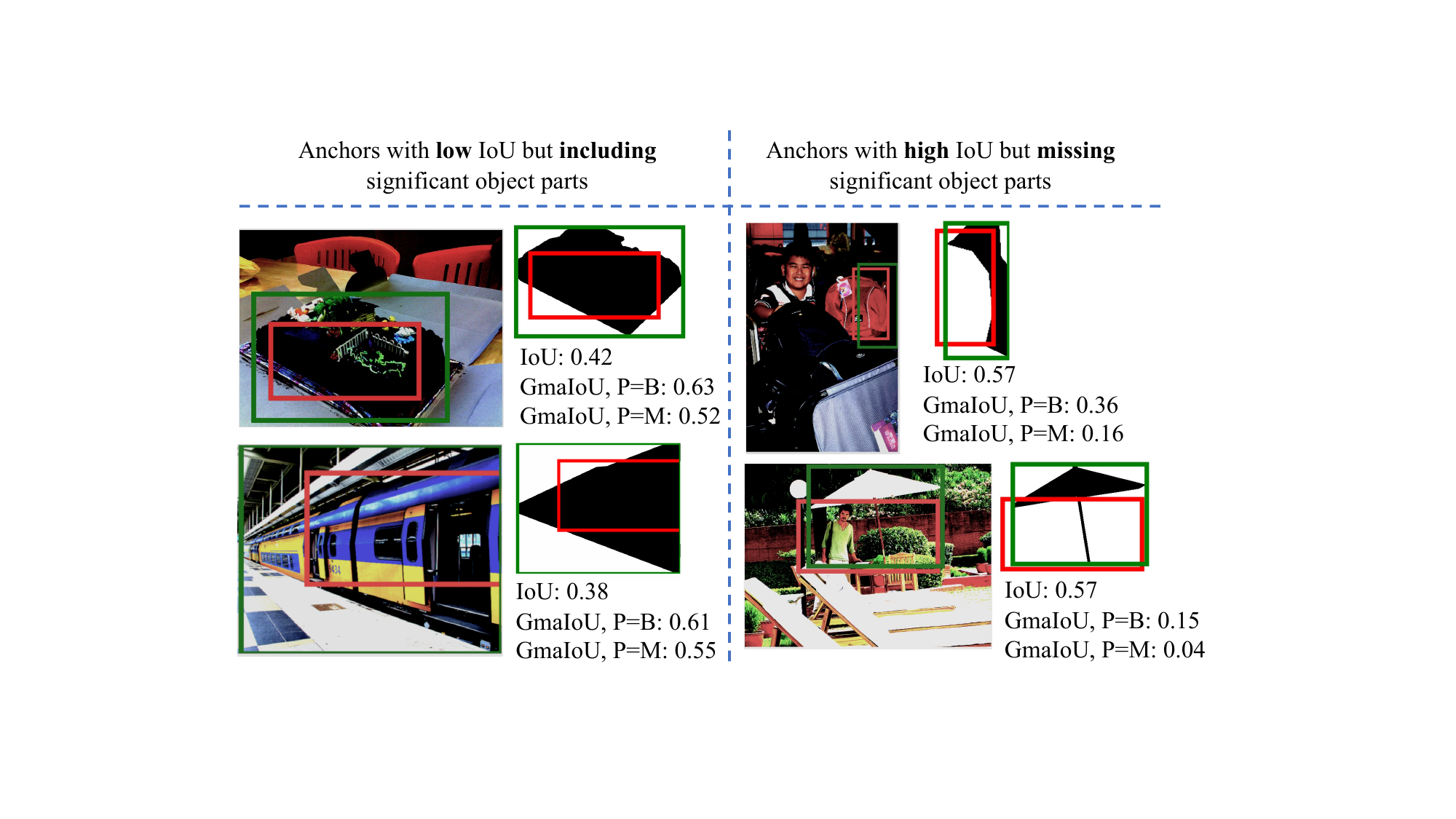}
    }
    \caption{Sample cases illustrating the need for Generalized mask-aware IoU (GmaIoU). Green boxes denote ground truth, red boxes are real anchors produced during the training. Left panel shows cases where the anchor covers a significant part of the object pixels but IoU is low (i.e. less than positive threshold of 0.50 for YOLACT). GmaIoU is higher than IoU for these cases, potentially correcting the assignment. Right panel shows cases where the anchor covers only a small part of the object pixels but IoU is high (so, anchors are positive). maIoU is lower than IoU, potentially correcting the assignment. Images are from COCO \cite{COCO}, GmaIoU is computed with $P=M$ as described in Section \ref{sec:miou}.\label{fig:Teaser}
} 
\end{figure}%

The assignment problem is commonly tackled using heuristic rules. One common way is to check the intersection over union (IoU) between an anchor and a ground-truth box, and employ a fixed threshold for deciding to match them 
\cite{yolact,MaskRCNN,retinamask}. 
Formally, an anchor, $\hat{B}$, can be assigned to a ground truth, $B$, when its IoU, defined as $\mathrm{IoU}(\hat{B}, B)=|\hat{B} \cap B|/|\hat{B} \cup B|$, exceeds a pre-determined threshold, $\tau$. In the case that an anchor $\hat{B}$ cannot be assigned to any of the ground-truths $B$ (i.e.  $\mathrm{IoU}(\hat{B}, B) < \tau,\;\,\forall B$), then $\hat{B}$ is assumed to be a background (i.e. negative) example. This conventional way of assigning the anchors with the ground truths can be referred as the ``fixed IoU threshold''  assignment strategy. As an alternative method, using an ``adaptive IoU threshold'', instead of a fixed one, has shown performance improvements in object detection 
\cite{ATSS,paa,noisyanchor}.
Still, most assignment methods rely on IoU as the de-facto proximity measure between ground truth boxes and anchors.

Despite its popularity, IoU has a certain drawback: The IoU between an anchor box and a ground truth box solely depends on their areas, thereby ignoring the shape of the object within the box, e.g. as provided by a segmentation mask. This may result in undesirable assignments due to counter-intuitively low or high IoU scores. For example, the IoU might be high, implying a positive anchor, but only a small part of the object is included in the anchor box; or the IoU may be low, implying a negative anchor, but a large part of the object is included in the anchor box. Fig. \ref{fig:Teaser} presents examples for such cases, arising due to objects with unconventional poses, occlusion and objects with articulated or thin parts. As we will show in our analysis  (in Section \ref{subsec:Analysis}, Fig. \ref{fig:MOB}), such examples tend to produce relatively larger loss values and adversely affect training.

In this paper, we introduce Generalized Mask-aware IoU (GmaIoU) as a novel measure for anchor assignment for instance segmentation. GmaIoU is based on exploiting the ground truth masks of the objects, normally used only for supervision through loss computation. Specifically, unlike the conventional IoU, which only compares an anchor box with a ground-truth box, GmaIoU compares an anchor box with a ground-truth box and mask pair. In IoU, all pixels within the boxes have equal importance, whereas in GmaIoU ground-truth mask pixels are promoted. Hence, GmaIoU produces an assignment/proximity score that is more consistent with the shape of the object (Fig. \ref{fig:Teaser}). 
Since a naive computation of GmaIoU is impractical due to dense pixel-wise comparisons considering the large number of anchors, we present an efficient algorithm resulting in a similar training time with using the conventional assignment methods based on IoU. YOLACT, a popular instance segmentation method, with GmaIoU-based ATSS assigner consistently improves the baseline IoU-based ATSS assigner by $\sim 1$ mask AP and standard YOLACT (i.e. fixed IoU threshold) by $\sim 2$ mask AP with a similar inference time of YOLACT. 


This paper extends our previous work \cite{maIoU} in three critical aspects. Firstly, we reformulate our novel mask-aware IoU definition for a more general case; between an arbitrary polygon (i.e., ground-truth masks) and a box. By this, we obtain GmaIoU which allows a switch between using extra box information ($P=B$), or using only the mask information ($P=M$). We show that the case $P=M$ provides consistent gains in terms of mask AP over the initially-proposed maIoU \cite{maIoU}, which corresponds to the special case of $P=B$ in the current generalized formulation. Secondly, we generalize the algorithm that consider ground truth masks efficiently during training for instance segmentation such that GmaIoU can be efficiently incorporated into the assignment method during training for both of its special cases ($P=B$ or $P=M$). We note that our algorithm is critical to leverage ground truth masks in both cases, as the brute force computation cannot be practically employed owing to the exploding runtime.  Finally, we improve our previous maYOLACT detector and develop  GmaYOLACT by incorporating recently proposed improvement strategies. The resulting GmaYOLACT detector performs instance segmentation in real-time with similar efficiency with the baseline YOLACT, but results in a significant $\sim 7$ mask AP and $\sim 10$ box AP improvement in terms of segmentation and detection performance respectively. We also note that GmaYOLACT detector outperforms our previous maYOLACT detector $\sim 1$ mask AP and $\sim 2.5$ box AP; demonstrating the effectiveness of our design in this revised version of the paper. 

\section{Related Work}
\label{sec:RelatedWork}
\begin{table*}
    \centering
    \small
    \caption{\small IoU Variants, their inputs and primary purposes. IoU variants assign a proximity measure based on the properties (prop.) of two inputs (Input 1 and Input 2). In practice, existing variants compare the inputs wrt. the same properties (i.e. either boxes or masks). Our Generalized Mask-aware IoU (GmaIoU) can uniquely compare a box with a box and a mask. With this, GmaIoU compares anchors (i.e. only box) with ground truths (box and mask) in order to provide better anchor assignment. *: GIoU is also used as a performance measure. }
    \resizebox{\textwidth}{!}{\begin{tabular}{|c|c|c|c|c|c|}
         \hline
         \multirow{2}{*}{IoU Variant}&
         \multicolumn{2}{c|}{Input 1 prop.} & \multicolumn{2}{c|}{Input 2 prop.} & Primary purpose as \\
         \cline{2-5}
         &Box&Mask&Box&Mask&proposed in the paper\\ \hline \hline
         Mask IoU \cite{COCO, Cityscapes}& \xmark & \cmark & \xmark & \cmark  & Performance measure  \\ \hline 
         Boundary IoU \cite{boundaryiou}& \xmark & \cmark & \xmark & \cmark & Performance measure \\ \hline
         Generalized IoU \cite{GIoULoss}  & \cmark &\xmark & \cmark &  \xmark & Loss function* \\ \hline
         Distance IoU \cite{DIoULoss} & \cmark &\xmark & \cmark &  \xmark & Loss function \\ \hline
         Complete IoU \cite{CIoULoss} &\cmark &\xmark & \cmark &  \xmark & Loss function \\ \hline \hline
         \textit{Generalized Mask-aware IoU (Ours)} & \cmark & \cmark & \cmark & \xmark & \textit{Assigner} \\ \hline
    \end{tabular}}
    \label{tab:iou_variants}
\end{table*}

\subsection{Deep Instance Segmentation} 

Deep learning based instance segmentation studies typically take a `detect and segment' approach, taking inspiration from and augmenting the deep object detection approaches. For example, the prominent instance segmentation model Mask R-CNN \cite{MaskRCNN} and its variations \cite{maskscoring,RSLoss} extended  Faster R-CNN \cite{FasterRCNN}, the infamous two-stage object detector, by adding a new branch for mask prediction. The mask prediction branch is trained simultaneously with the classification and localization branches. Over the years, a similar extension has been introduced over one-stage object detectors to obtain one-stage instance segmentation networks. Examples include YOLACT \cite{yolact} and YOLACT++ \cite{yolact-plus}, which are are based on a YOLO-like architecture;  PolarMask \cite{polarmask} and PolarMask++ \cite{PolarMask-plus}, which extended FCOS \cite{FCOS} for instance segmentation.

An alternative approach is to formulate instance segmentation directly as an instance classification problem, as proposed in SOLO variants \cite{solo,solov2}. In this approach, each cell in a grid generates an instance mask in parallel to predicting an object category. Another approach is to rely on transformers and predict a sparse set of instance segmentation masks without resorting to the NMS \cite{istr,li2022mask,yu2022soit}. We note that in this work we focus on the anchor-based instance segmentation methods, hence these alternative approaches are out of our scope.  



\subsection{Anchor Assignment in Instance Segmentation}

During training, anchor-based instance segmentation methods such as YOLACT \cite{yolact,yolact-plus} and Mask R-CNN variants \cite{MaskRCNN,maskscoring} require labeling anchors as `positive' (as an object hypothesis) or `negative' (belonging to background) by looking at their IoU values. In this approach, if an anchor's IoU with a ground truth object is larger than $\tau^+$, the anchor is labeled as a positive. Otherwise, i.e. if the anchor's IoU is less than $\tau^-$ (not necessarily equal to $\tau^+$), the anchor is labeled as a negative. An anchor whose maximum IoU with the ground truths is between $\tau^-$ and $\tau^+$ is considered as an outlier and ignored during training. 

As an example, Mask R-CNN uses $\tau^-=0.30$ and $\tau^+=0.70$ for the first stage (i.e. the region proposal stage), and $\tau^-=\tau^+=0.50$ for the second stage. In contrast, YOLACT variants \cite{yolact,yolact-plus} and RetinaMask \cite{retinamask} prefer $\tau^-=0.40$ and $\tau^+=0.50$.

The methods for assigning anchors in instance segmentation are identical to those used in deep learning-based object detection. Vo et al. \cite{neurcomp_assignment_review} proposed an in-depth analysis of anchor assignment and sampling strategies in deep learning-based object detection. 



\subsection{Adaptive Anchor Assignment Methods in Object Detection} 

Recent studies have shown that identifying positive and negative labels for anchors based on the anchor distribution performs better than using fixed IoU thresholds. A prominent study is ATSS \cite{ATSS}, wherein top-k anchors with highest IoU values are used to determine the IoU threshold for each ground truth (see also Section \ref{subsec:ATSS} for more details on ATSS). Another example is PAA \cite{paa} wherein a Gaussian Mixture Model is fit to Generalized IoU values for obtaining the distribution of positives and negatives for each ground truth. Fu et al. proposed a dynamic anchor assignment strategy \cite{adasgpm} which uses the Gaussian probabilistic distribution-based fuzzy similarity metric (GPM) and the adaptive dynamic anchor mining strategy (ADAS) to improve tiny object detection. The GPM measures the similarity between small bounding boxes and pre-defined anchors more accurately, while ADAS adjusts label assignment dynamically to better match the object distribution in the image. Similar approaches have been adopted by Li et al. \cite{noisyanchor} to dynamically label anchors based on their cleanliness and Ke et al. \cite{mal} to formulate anchor selection as a multiple instance learning problem. Although these studies have showcased promising results, they have been designed and evaluated for object detection and they do not utilize object masks in their assignment mechanisms.


\subsection{Other IoU Variants} 

IoU is a useful geometric measure for quantifying overlap between shapes, and over the years, IoU and its variants have been widely used in the literature for different purposes -- see Table \ref{tab:iou_variants} for a comparative summary. The table shows that IoU can be applied to not only boxes but also boundaries and masks as a measure of overlap. Moreover, we see that IoU or its variations with different forms of normalization (as in Generalized IoU \cite{GIoULoss}, Distance IoU \cite{DIoULoss}, Complete IoU \cite{CIoULoss}) can be used as loss functions for training localization branches of object detectors. 

The IoU variant that is most relevant to our contribution, i.e. Mask-aware IoU, is  Mask IoU \cite{COCO, Cityscapes}, which also depends on and uses masks. In contrast to Mask-aware IoU, Mask IoU is designed to measure the similarity between two masks (i.e., a predicted mask and the ground truth mask) during performance evaluation. Boundary IoU \cite{boundaryiou} is similar in that it considers a thin mask around object boundaries for calculating boundary-wise similarity between two shapes. 

Although these measures have yielded promising gains in their respective tasks, Mask-aware IoU has unique differences: First of all, Mask IoU and Boundary IoU only measure similarity between two masks, and therefore, they cannot compare a bounding box with another box and the mask in that box. Secondly, the other IoU measures are designed for quantifying similarity between boxes only, and therefore, they do not consider overlap at the shape level. Moreover, they have been used primarily as loss functions and not as an anchor assignment criterion.

\subsection{Comparative Summary.} 

Our coverage of related work above identifies two main gaps in anchor-based instance segmentation approaches: (i) They use IoU as the main criterion for labeling anchors and (ii) they rely on fixed IoU thresholds. 

To address the first limitation, we introduce GmaIoU as the first IoU measure that considers the ground truth mask while evaluating the overlap with an anchor box and the ground truth box -- see Table \ref{tab:iou_variants} for a comparison with existing measures. As for the second limitation, we incorporate our GmaIoU into the dynamic assignment strategy of ATSS \cite{ATSS}, which has provided significant gains in deep object detection. 

We replace the conventional IoU-based assignment strategy in instance segmentation by our GmaIoU-based ATSS assigner. Our novel assigner improves the performance of YOLACT both in terms of accuracy by $1.5-2.0$ mask AP and in terms of efficiency. We, then, utilize the efficiency gap enabled by our assigner. Specifically, we meticulously select improvement strategies and incorporate into the YOLACT such that its efficiency is preserved. In such a way, we obtain GmaYOLACT detector which has a similar inference time with YOLACT but significantly (around 7 mask AP) more accurate.

\section{Methodology}
\label{sec:miou}

Existing anchor assignment mechanisms rely heavily on the conventional IoU between the anchor and the ground truth boxes, thereby effectively ignoring the shapes of the objects. Here, in Section \ref{subsec:Analysis} we first demonstrate that among the anchors with similar box-level IoUs, the shape of the object within the anchor has an affect on learning, which is simply ignored by the existing IoU-based anchor assignment mechanisms. To address this gap, we design Generalized Mask-aware Intersection-over-Union (GmaIoU) in Section \ref{subsec:maIoU} as an IoU variant that takes into account the ground-truth masks, already available to supervise the instance segmentation methods, and use GmaIoU to assign anchors to ground-truths. A na\"ive computation of GmaIoU, which requires to process a large number of masks at each training iteration, is not efficient in terms of processing time. To overcome this, in Section \ref{subsec:Algorithm}, we present an algorithm to decrease the effect of the GmaIoU on the training time significantly. As a result, the resulting GmaIoU can be efficiently used by existing anchor assignment methods, which we incorporate into ATSS \cite{ATSS} as a SOTA anchor assignment strategy in Section \ref{subsec:ATSS}.


\begin{figure*}
    \centering
    \includegraphics[width=\textwidth]{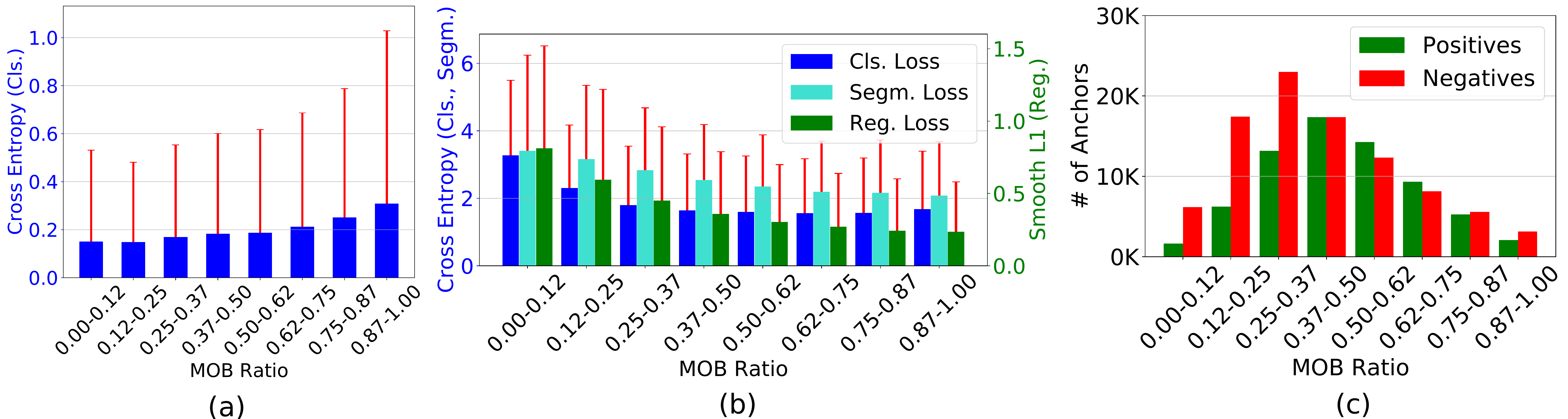}
    \caption{\small \textbf{(a,b)} Mean and standard deviation of loss values of negative anchors (a) and positive (b) anchors with \textbf{similar IoUs} (IoU between $[0.30-0.50]$ for negatives and $[0.50-0.70]$ for positives) over different MOB ratios for a \textbf{trained} YOLACT on COCO \textit{minival}. Red lines denote the standard deviation. Note that when the MOB ratio increases, the loss values  increase for negatives; however,  the loss values of all three sub-tasks (Cls.: classification, Segm.: segmentation, Reg.: regression) tend to decrease for positives.
    \textbf{(c)} The number of anchors for each MOB ratio is in the order of thousands.
    \label{fig:LossAnalysis}}
\end{figure*}

\subsection{The Mask-over-box Ratio and Observations}
\label{subsec:Analysis}

In this section, we demonstrate that the shape of an object inside an anchor, typically ignored by the conventional IoU, has an effect on how good this anchor is predicted by a segmentation model. Such an analysis requires quantifying how dense an anchor is in terms of the ground truth mask pixels. Accordingly, we first introduce a simple, but intuitive measure called the \textit{Mask-over-box (MOB) ratio}. Broadly speaking, the \textit{Mask-over-box (MOB) ratio} measured between a box and a mask is the ratio of the mask pixels falling into the box. More formally,

\begin{definition}
Mask-over-box (MOB) ratio between a box (i.e. anchor or ground truth), $\bar{B}$, and a mask (i.e. generally a ground-truth in our context), $M$, is the ratio between (i) the area corresponding to the intersection of the mask $M$ and the box $\bar{B}$, and (ii) the area of the $\bar{B}$ itself:
\begin{align}
\mathrm{MOB}(\bar{B},M)= \frac{|\bar{B} \cap M|}{|\bar{B}|}.     
\end{align}
\end{definition}

Since $|\bar{B}| \geq |\bar{B} \cap M|$, $\mathrm{MOB}(\bar{B},M) \in [0,1]$. In the case that a box $\bar{B}$ contains few pixels from $M$, $\mathrm{MOB}(\bar{B},M)$ will be lower, and it will be higher if $\bar{B}$ is dense with pixels from $M$. One specific configuration that we will look at is when $\bar{B}$ is the ground truth box of $M$. In that case, i.e., $\bar{B}=B$, $|M \cap B|=|M|$, and consequently, $\mathrm{MOB}(B,M)=|M|/|B|$. 

First, leveraging our MOB Ratio, we demonstrate that the shape of an object falling into an anchor has an effect on the quality of the segmentation. To do so, we adopt YOLACT, an anchor-based real-time instance segmentation method, with a ResNet-50 feature extractor. Specifically, during inference for each anchor, YOLACT predicts a classification score, a regressed box and a segmentation mask, and combining these outputs enables the model to have a segmentation mask with a classification label for each input anchor. During training, if an anchor matches with an object, i.e., it becomes a positive anchor, the losses for all three sub-tasks (classification, regression, and segmentation) are estimated for that anchor. If the anchor is deemed as negative, it contributes only to the classification loss. 

Considering these tasks and the label of the anchor, we want to see whether the mask pixels inside anchors with similar IoUs have an effect on the prediction quality. Consequently, we apply \textit{converged} YOLACT to the images in the COCO validation set and plot 
\begin{compactitem}
    \item the average classification loss values wrt. MOB ratio for negative anchors with similar IoUs (i.e., IoU between $[0.30-0.50]$) in Fig. \ref{fig:LossAnalysis}(a), and
    \item the average loss values for all tasks wrt. MOB ratios for positive anchors with similar IoUs (i.e., IoU between $[0.50-0.70]$) in Fig. \ref{fig:LossAnalysis}(b),
\end{compactitem}
both enabling us to  make the following crucial observation:

\begin{observation}
\textbf{The loss values of the anchors with similar IoUs \textit{is affected in all tasks} as MOB ratio changes for both positive and negative anchors.} Specifically, for negative anchors, the loss increases with increasing MOB (Fig. \ref{fig:LossAnalysis}(a)). 
However, it is the other way around for the positive anchors (Fig. \ref{fig:LossAnalysis}(b)).  
Also, the numbers of anchors with larger losses are in the order of thousands in all cases (Fig. \ref{fig:LossAnalysis}(c)). 
These results depict that the error of the model for an anchor is related to the ratio of mask pixels covered by that anchor, which is completely ignored by the conventional IoU-based assigners.
\end{observation}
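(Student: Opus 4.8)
The statement is an \emph{empirical} observation about a converged network rather than an algebraic identity, so the plan is to establish it by a controlled measurement rather than a derivation. First I would take a fully trained YOLACT with a ResNet-50 backbone and run it in inference mode over every image of the COCO validation set (minival), recording for each of the $\sim 20k$ anchors per image its predicted classification score, regressed box and segmentation mask, together with the per-anchor losses induced by the training objective. In parallel, for each anchor $\hat{B}$ I would compute both its conventional $\mathrm{IoU}(\hat{B},B)$ against the matched ground truth $B$ and its mask-over-box quantity $\mathrm{MOB}(\hat{B},M)$ from the MOB-ratio definition above, so that every anchor is tagged with the pair $(\mathrm{IoU},\mathrm{MOB})$.

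The key idea is to isolate the effect of the shape term by \emph{conditioning on} IoU. I would restrict attention to a narrow IoU band in which the assigned label is fixed --- $[0.30,0.50]$ for negatives and $[0.50,0.70]$ for positives --- so that within each band the box-level proximity is essentially constant and any residual variation in loss must be attributed to the mask-level quantity $\mathrm{MOB}$. Within each band I would bucket the anchors by MOB and, for every bucket, report the mean and standard deviation of the relevant losses: the classification loss alone for negatives (which contribute only to classification) and the classification, regression and segmentation losses for positives. Plotting mean loss against MOB, as in Fig.~\ref{fig:LossAnalysis}(a,b), should then expose the two opposite monotone trends claimed in the statement.

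To explain \emph{both} directions with a single mechanism, I would argue that MOB controls how ``clean'' an example is given its label. A negative anchor with high MOB is a hard negative: its box is densely filled with object pixels despite a sub-threshold IoU, so the model is prone to fire on it and its classification loss is inflated, which yields the increasing trend for negatives. Conversely, a positive anchor with high MOB is a clean foreground example whose box is well covered by the object, making classification, localization and mask prediction all easier and hence lowering the loss across all three heads, which yields the decreasing trend for positives. To support the final clause of the statement --- that these high-loss anchors are not a negligible minority --- I would simply histogram the anchor counts per MOB bucket (Fig.~\ref{fig:LossAnalysis}(c)) and verify that each bucket, including the high-MOB negatives and low-MOB positives, is populated by thousands of anchors.

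The main obstacle will be confounder control, i.e.\ ensuring that the measured dependence is genuinely on MOB and not on a correlated nuisance variable. Narrow IoU banding removes the dominant confounder (box proximity), but object scale, class frequency and occlusion also modulate loss magnitudes and can co-vary with MOB; I would therefore need the buckets to remain densely populated across the full MOB range so that the per-bucket means are stable, and I would report the within-bucket standard deviations (the red bars) to show that the monotone shift of the mean is large relative to the spread. The part that most strengthens the claim, and which I would check most carefully, is that the trend for positives holds \emph{simultaneously} across all three sub-tasks rather than in classification alone.
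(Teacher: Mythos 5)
Your proposal takes essentially the same route as the paper: the paper likewise applies a converged ResNet-50 YOLACT to COCO \textit{minival}, conditions on the same narrow IoU bands ($[0.30$--$0.50]$ for negatives, $[0.50$--$0.70]$ for positives), and plots the mean and standard deviation of the classification loss for negatives and of all three task losses for positives against MOB ratio, together with the per-bucket anchor counts (Fig.~\ref{fig:LossAnalysis}(a--c)). Your explicit confounder discussion and the hard-negative/clean-positive interpretation are reasonable additions but do not change the method, so the proposal is correct and equivalent to the paper's own empirical argument.
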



Next, we conduct a similar analysis on the ground-truth boxes (Fig. \ref{fig:MOB}(a)), which leads us to our second observation. 

\begin{observation} \textbf{Similar to anchors, MOB ratios of the ground truth boxes also vary significantly}. Note that a notable ratio of the ground-truth boxes has low MOB ratios (i.e. for $30\%$ of the ground truths, MOB ratio is less than $0.50$). 
\end{observation}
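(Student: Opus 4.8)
The plan is to treat this as an empirical measurement over the COCO annotations rather than a deductive derivation, since the statement quantifies the distribution of a geometric quantity across annotated instances and the figure \ref{fig:MOB}(a) it refers to is a histogram. The starting point is the simplification already established above: when the box is the tight ground-truth box $B$ of the mask $M$, we have $|M \cap B| = |M|$ and hence $\mathrm{MOB}(B,M) = |M|/|B|$. This reduces the quantity of interest to the \emph{fill fraction} of each ground-truth bounding box by its own object mask, a single scalar per instance. Since the mask lies inside its tight box by construction, $\mathrm{MOB}(B,M)\in[0,1]$ holds automatically, so no clipping is needed.

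Next I would compute this scalar for every ground-truth instance in a fixed, large COCO split. For each instance I would rasterize the annotation polygon to obtain the mask pixel count $|M|$, take the axis-aligned tight box with area $|B|$, and form the ratio $|M|/|B|$. Because this ratio is a property of the annotation alone, the resulting distribution is essentially split-independent as long as the sample is large; COCO supplies on the order of $10^5$--$10^6$ instances, so the empirical distribution is stable. I would summarize it both as a histogram over equal-width bins of $[0,1]$ (displayed in Fig.~\ref{fig:MOB}(a)) and through its empirical cumulative distribution function $\hat{F}$.

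The two claims then follow by reading off this distribution. The qualitative part (``MOB ratios vary significantly'') is established by the spread of the histogram: appreciable probability mass across the whole interval $[0,1]$ together with a large standard deviation, in contrast to the spike near $1$ that a ``boxes are tightly filled'' assumption would predict. The quantitative part is exactly $\hat{F}(0.50)\approx 0.30$, i.e.\ the counted fraction of instances with ratio below $0.50$.

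The main obstacle is methodological consistency rather than any nontrivial argument. I must fix a single rasterization resolution so that $|M|$ and $|B|$ are measured on the same pixel grid, decide how to treat crowd annotations (\texttt{iscrowd}) and degenerate boxes, and handle extremely small objects, for which a few-pixel mask makes $|M|/|B|$ sensitive to rounding. This last point is the most delicate, since the discretization error is largest for tiny instances; the reassurance is that, given the very large sample size, excluding or down-weighting such borderline cases should leave the reported $30\%$ percentile essentially unchanged.
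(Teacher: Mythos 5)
Your proposal is correct and matches the paper's approach: the observation is supported empirically by computing $\mathrm{MOB}(B,M)=|M|/|B|$ for every ground-truth instance on the COCO training set and reading the spread and the $30\%$-below-$0.50$ fraction off the resulting distribution, which is exactly the histogram shown in Fig.~\ref{fig:MOB}(a). Your additional implementation caveats (rasterization, crowd annotations, tiny instances) go beyond what the paper states but do not change the argument.
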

To summarize, ground-truth boxes incur greatly varying MOB ratios, which cannot be captured by conventional IoU-based assignment methods as they do not make use of masks. Therefore, there is room for an alternative IoU measure that takes into account masks of objects.

\subsection{Generalized Mask-aware Intersection-over-Union}
\label{subsec:maIoU}

\paragraph{Intuition} The main intuition behind Generalized mask-aware IoU (GmaIoU) is to take into account the ground-truth masks while computing the overlap between the anchor and the ground truth object. 
Subsequently, this proximity is used in the assignment of anchors to ground-truth objects, which corresponds to labeling them as positive or negative. 
Particularly, if the anchor box includes more on-mask pixels, then it will have a larger assignment score measured by our GmaIoU. In the initial version of this measure coined as Mask-aware IoU (maIoU) \cite{maIoU}, we achieved this by defining the \textit{energy} of a ground truth by the area of its bounding box, i.e., $|B|$ and distributed the contributions of the off-mask pixels uniformly over the on-mask pixels in $B$. Here, for the sake of generalization, we instead assume a polygon $P$ for each ground that comprises its energy as $|P|$. As we will discuss, this perspective enables us to formulate not only maIoU,  as well as the IoU between a box and a mask, thereby increasing the generalizability of our approach. 

\paragraph{Derivation} We first start our derivation with the conventional intersection formulation, which does not differentiate between on-mask and off-mask pixels. Specifically, denoting the contribution of on-mask pixels by $w_m$ and that of off-mask pixels by $w_{\overline{m}}$, the intersection between $B$ and $\hat{B}$ ($\mathcal{I}(B, \hat{B})$) can be written as:
\begin{equation}
\mathcal{I}(B, \hat{B}) = \sum_{i\ \in\ B \cap \hat{B}} w = \sum_{i\ \in\ \hat{B} \cap M} w_m +  \sum_{i\ \in\ \left(\hat{B} \cap B - \hat{B} \cap M \right )} w_{\overline{m}}, \label{eqn:IoU}
\end{equation}
which effectively does not make use of the mask $M$ since $w=w_m=w_{\overline{m}}=1$ for IoU as also illustrated in Fig. \ref{fig:MOB}(b).

In GmaIoU, different from the IoU, we basically discard the contribution of off-mask pixels to the IoU computation by using $w_{\overline{m}}=0$. As for the contribution of on-mask pixels ($w_{m}$), as mentioned while providing the intuition, we introduce $P$ as the polygon comprising the total energy of the ground truth box and  
effectively distribute the energy of all pixels in $P$ uniformly over on-mask pixels, yielding $w_m=|P|/|M|$. 
With these weights, the generalized mask-aware intersection, $\mathrm{Gma}\mathcal{I}$ is defined by extending Eq. \ref{eqn:IoU}:
\begin{align}
     \mathrm{Gma}\mathcal{I} (\hat{B}, P, M) & = \sum_{i\ \in\ \hat{B} \cap M} w_m +  \cancelto{0}{\sum_{i\ \in\ \left( \hat{B} \cap P-\hat{B} \cap M \right )} w_{\overline{m}}} \\
     &= w_m |\hat{B} \cap M| = \frac{ |P|}{|M|}|\hat{B} \cap M|.
\end{align}
Similarly, we extend the conventional union defined as $|B|+|\hat{B}|-|B \cap \hat{B}|$ using these new weights to obtain the generalized mask-aware union ($\mathrm{Gma}\mathcal{U} (\hat{B}, P, M)$) as follows:
\begin{align}
    \mathrm{Gma}\mathcal{U} (\hat{B}, P, M) = &  |P| + (|\hat{B}|-|\hat{B} \cap P| + \mathrm{Gma}\mathcal{I}(\hat{B}, P, M)) \nonumber \\
            & - \mathrm{Gma}\mathcal{I} (\hat{B}, P, M) \\
    = & |\hat{B} \cup P|.
\end{align}
Finally, Generalized Mask-aware IoU is simply the ratio between Generalized Mask-aware Intersection ($\mathrm{Gma}\mathcal{I} (\hat{B}, P, M)$) and Generalized Mask-aware Union ($\mathrm{Gma}\mathcal{U} (\hat{B}, P, M)$):
\begin{align}
    \label{eq:Definition1}
    \mathrm{GmaIoU} (\hat{B}, P, M) = \frac{\mathrm{Gma}\mathcal{I} (\hat{B}, P, M)}{\mathrm{Gma}\mathcal{U} (\hat{B}, P, M)} =  \frac{|P|}{|M|}\frac{|\hat{B} \cap M|}{|\hat{B} \cup P|}.
\end{align}
To provide more intuition, we next discuss two specific settings of Eq. \ref{eq:Definition1}, which we will also employ in the rest of the paper.
%
\begin{figure*}
        \centering
        \begin{tabular}{@{}c@{}}
                \includegraphics[width=0.40\textwidth]{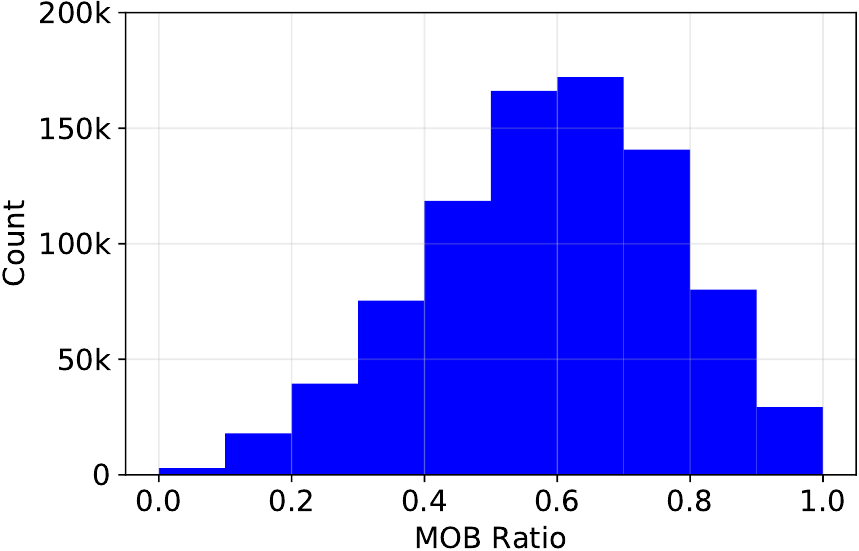}\\ [\abovecaptionskip]
        \footnotesize (a)
        \end{tabular}
        \begin{tabular}{@{}c@{}}
                \includegraphics[width=0.30\textwidth]{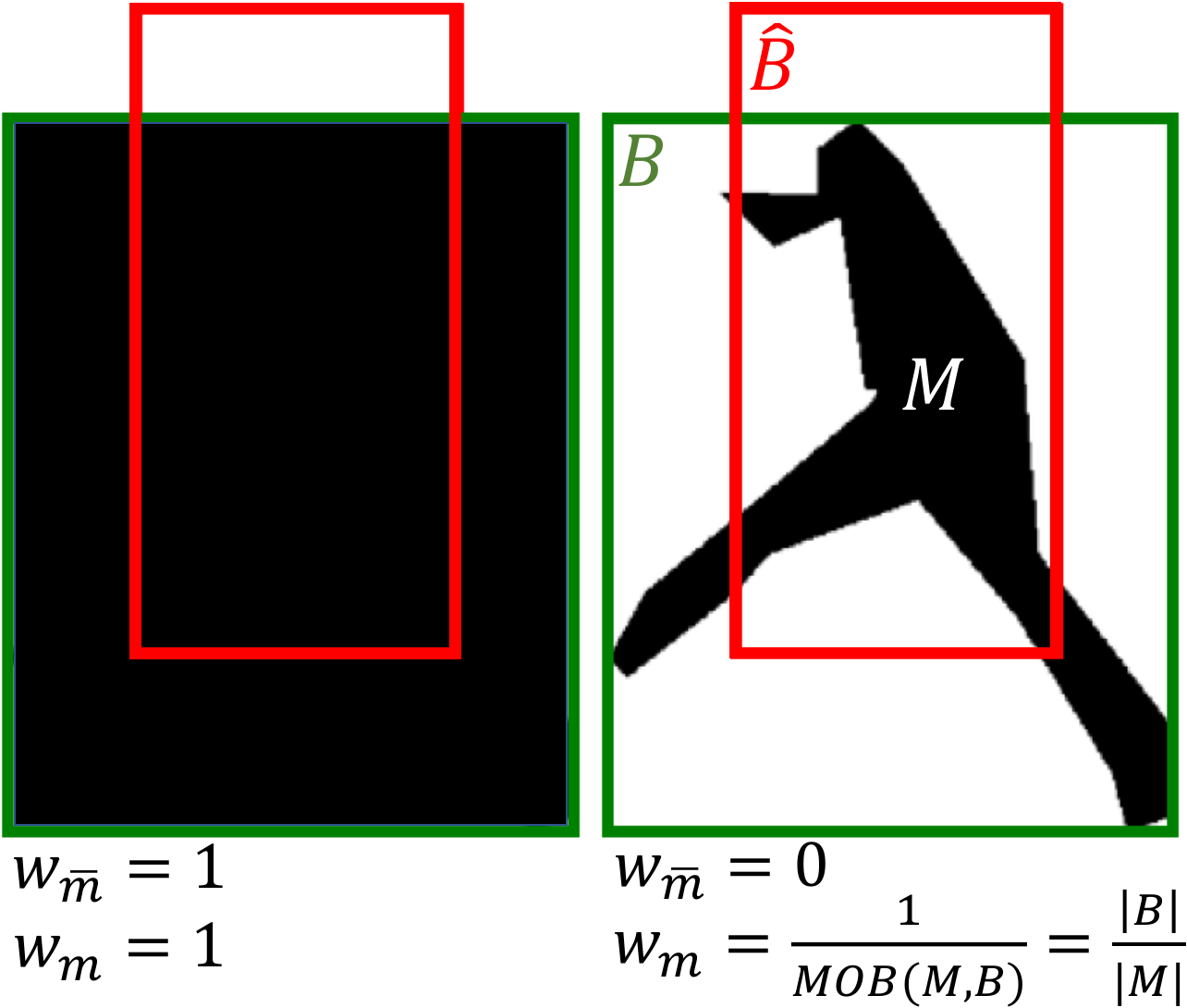}\\ [\abovecaptionskip]
        \footnotesize (b)
        \end{tabular}
        \caption{(a) The distribution of MOB ratios of the ground truths on COCO training set. (b) How IoU and maIoU weights the pixels in ground truth box (see Fig. \ref{fig:Teaser} top-left example for the image of this example). While IoU does not differentiate among on-mask ($w_m$) and off-mask ($w_{\overline{m}}$) pixels, our maIoU sets $w_{\overline{m}}=0$ and weights each on-mask pixel by inverse MOB ratio considering object mask $M$.}
        \label{fig:MOB}
\end{figure*}

\begin{figure*}
    \begin{tabular}{@{}c@{}}
        \includegraphics[width=0.32\textwidth]{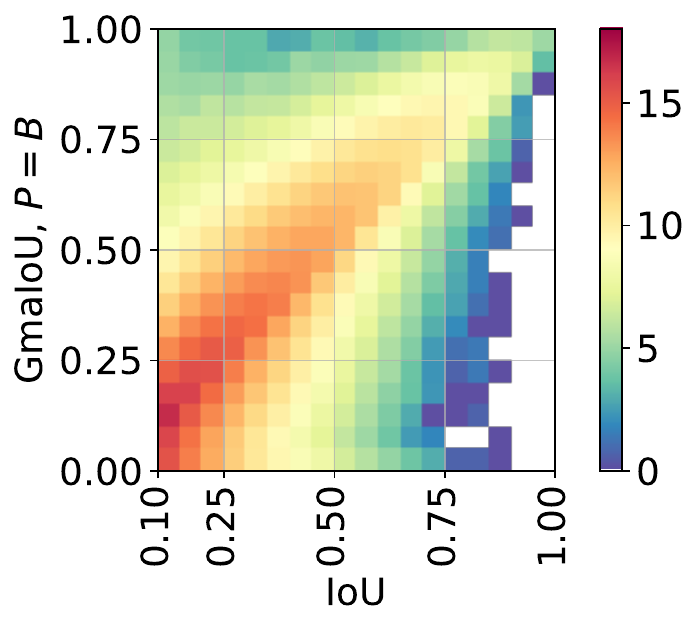} \\ [\abovecaptionskip]
        \footnotesize (a)
    \end{tabular}
    \begin{tabular}{@{}c@{}}
        \includegraphics[width=0.32\textwidth]{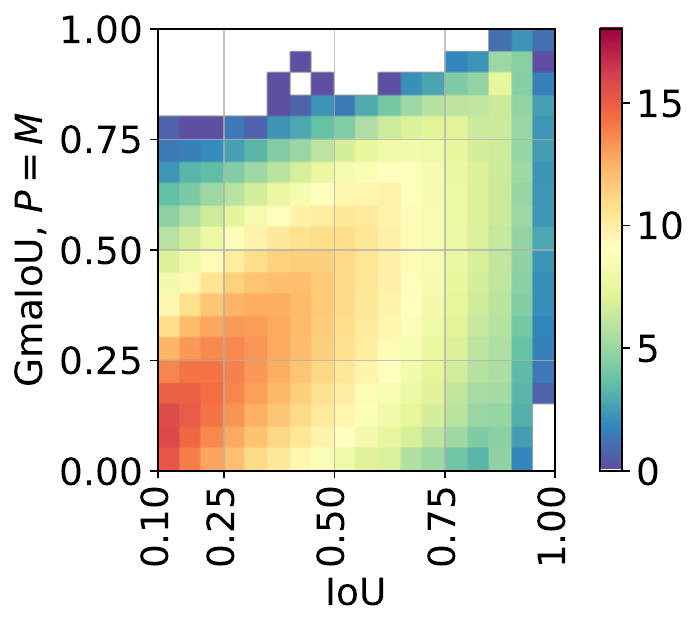}\\ [\abovecaptionskip]
        \footnotesize (b)
    \end{tabular}
    \begin{tabular}{@{}c@{}}
        \includegraphics[width=0.32\textwidth]{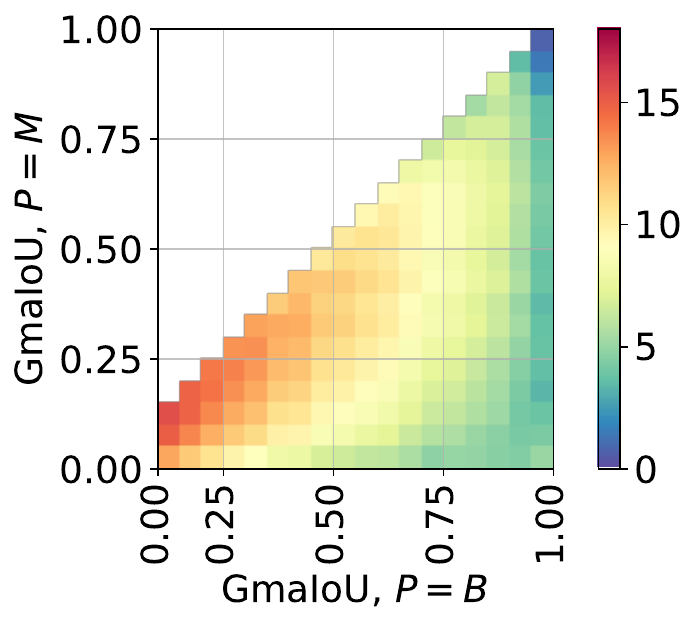}\\ [\abovecaptionskip]
        \footnotesize (c)
    \end{tabular}
    \caption{Anchor count distribution (in log-scale) of (a) IoU vs GmaIoU, $P=B$; (b) IoU vs GmaIoU, $P=M$; and (c) GmaIoU, $P=B$ vs GmaIoU, $P=M$. While $\mathrm{IoU}$ \& high-$\mathrm{GmaIoU}$ positively correlate for both $P=B$ and $P=M$, there are quite a few examples with low-$\mathrm{IoU}$ \& high-$\mathrm{GmaIoU}$ and vice versa as shown in (a) and (b). (c) empirically demonstrates that $P=B$ is an upper bound for $B=M$ case of GmaIoU.}
    \label{fig:histograms}
\end{figure*}

\paragraph{Special Cases of $P$} Here, we show that setting $P=B$ corresponds to maIoU, and the case where $P=M$ simply gives the IoU between the ground truth mask $M$ and the anchor $\hat{B}$. 

\textit{Case 1.} If $P=B$, then Eq. \ref{eq:Definition1} reduces to maIoU \cite{maIoU}:
\begin{align}
    \label{eq:Definition2}
    \frac{|B|}{|M|}\frac{|\hat{B} \cap M|}{|\hat{B} \cup B|}=  \frac{1}{\mathrm{MOB}(B,M)}\frac{|\hat{B} \cap M|}{|\hat{B} \cup B|}=\mathrm{maIoU} (\hat{B}, B, M).
\end{align}

\textit{Case 2.} If $P=M$, then Eq. \ref{eq:Definition1} reduces to the IoU between the ground-truth mask and the anchor box:
\begin{align}
    \label{eq:Definition3}
    \frac{|M|}{|M|}\frac{|\hat{B} \cap M|}{|\hat{B} \cup M|}= \frac{|\hat{B} \cap M|}{|\hat{B} \cup M|}=\mathrm{IoU} (\hat{B}, M).
\end{align}

Therefore, our GmaIoU unifies existing IoU variants incorporating mask information into the definition of the proximity measure; offering a generalized perspective on those.

\paragraph{Interpretation} For both $\mathrm{GmaIoU} (\hat{B}, B, M)$ and $\mathrm{IoU} (\hat{B}, M)$, the range is between $0$ and $1$ and they are higher-better measures similar to the conventional IoU. Their key difference is that our $\mathrm{GmaIoU} (\hat{B}, B, M)$ incorporates the overlap of the box with the mask as illustrated in Fig. \ref{fig:MOB}(b) for its $P=B$ case. Besides, Fig. \ref{fig:histograms}(a,b) demonstrates the relationship between the conventional IoU and $\mathrm{GmaIoU} (\hat{B}, B, M)$ using the anchors (in log-scale) of YOLACT on COCO minival: As would be expected, $\mathrm{IoU}$ and $\mathrm{GmaIoU}$ are positively correlated in both cases. However, we observe quite a number of anchors with low $\mathrm{IoU}$ but high $\mathrm{GmaIoU}$ and vice versa. This implies that the anchor assignment methodologies, as in the case of our $\mathrm{GmaIoU}$, considering mask are different than the ones that are ignoring them. In the following we explore the relationship between two different configurations of our $\mathrm{GmaIoU} (\hat{B}, B, M)$.
\begin{corollary}
\label{corr:m_vs_b}
$\mathrm{GmaIoU} (\hat{B}, B, M) \geq \mathrm{GmaIoU} (\hat{B}, M, M)$.
\end{corollary}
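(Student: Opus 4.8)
The plan is to reduce the corollary to an elementary inequality among areas, relying on the single geometric fact that the ground-truth box $B$ is the bounding box of the mask $M$, so that $M \subseteq B$. First I would substitute the definition in Eq.~\ref{eq:Definition1} into both sides. With $P=B$ and with $P=M$ (the latter already simplified in Eq.~\ref{eq:Definition3}), the claim becomes
\[
\frac{|B|}{|M|}\frac{|\hat{B} \cap M|}{|\hat{B} \cup B|} \geq \frac{|\hat{B} \cap M|}{|\hat{B} \cup M|}.
\]
If $|\hat{B} \cap M| = 0$ both sides vanish and equality holds, so I assume $|\hat{B} \cap M| > 0$ and cancel this common positive factor. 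Cross-multiplying by the positive denominators $|\hat{B} \cup B|$ and $|\hat{B} \cup M|$ then turns the target into
\[
|B|\,|\hat{B} \cup M| \geq |M|\,|\hat{B} \cup B|.
\]

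Next I would expand each union through $|\hat{B} \cup X| = |\hat{B}| + |X| - |\hat{B} \cap X|$ and abbreviate $h = |\hat{B}|$, $a = |M|$, $b = |B|$, $i_M = |\hat{B} \cap M|$, and $i_B = |\hat{B} \cap B|$. After expansion the $ab$ term and the mixed $bh,\,ah$ terms cancel cleanly, and the inequality collapses to the single condition
\[
h\,(b - a) \geq b\,i_M - a\,i_B.
\]

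Finally I would close the argument by chaining three monotonicity facts, each a consequence of $M \subseteq B$: (i) $\hat{B} \cap M \subseteq \hat{B} \cap B$ yields $i_M \leq i_B$; (ii) $\hat{B} \cap B \subseteq \hat{B}$ yields $i_B \leq h$; and (iii) $M \subseteq B$ yields $b - a \geq 0$. These combine into
\[
b\,i_M - a\,i_B \leq b\,i_B - a\,i_B = (b-a)\,i_B \leq (b-a)\,h,
\]
which is exactly the inequality obtained above, completing the proof.

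The computation is routine, so the one genuine step is recognizing that the bounding-box containment $M \subseteq B$ is precisely what supplies all three monotonicity facts needed to sign every term, and that the cancellation after expanding the unions is what reduces the problem to the tight chained bound $b\,i_M - a\,i_B \leq (b-a)\,h$. The result is also consistent with the empirical observation reported in Fig.~\ref{fig:histograms}(c), where $P=B$ is seen to upper-bound the $P=M$ case.
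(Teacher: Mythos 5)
Your proof is correct, but it takes a genuinely different route from the paper's. The paper first factors the definition as
\begin{equation*}
\mathrm{GmaIoU}(\hat{B}, P, M) = \frac{|P|}{|\hat{B} \cup P|}\cdot\frac{|\hat{B} \cap M|}{|M|},
\end{equation*}
observes that the second factor is constant in $P$, and then proves that the ratio $|P|/|\hat{B} \cup P|$ is non-decreasing as $P$ grows by a single pixel $x$, splitting into the cases $x \in \hat{B}$ (numerator grows, denominator fixed) and $x \notin \hat{B}$ (both grow by one, which cannot decrease a fraction at most $1$); iterating this from $M$ up to $B$ gives the corollary. You instead cross-multiply the two sides directly, expand the unions, and reduce everything to the single inequality $h(b-a) \geq b\,i_M - a\,i_B$, which you close with the three containment facts $i_M \leq i_B \leq h$ and $a \leq b$. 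Your argument is a one-shot algebraic verification that avoids the implicit induction over pixels in the paper's ``grow $P$ one pixel at a time'' step, and it is fully rigorous as written (including the degenerate case $|\hat{B} \cap M| = 0$). The paper's factorization, on the other hand, isolates the structural reason the result holds --- monotonicity of $|P|/|\hat{B} \cup P|$ in $P$ under inclusion --- which immediately extends to any intermediate polygon $M \subseteq P \subseteq B$; note, though, that your computation generalizes just as well, since nothing in your chain uses more than $M \subseteq B$, so replacing $(M, B)$ by any nested pair $(P, P')$ gives the same monotonicity statement. One small wording fix: after expansion the $ab$ terms cancel, but the $bh$ and $ah$ terms do not cancel --- they collect into $h(b-a)$; your displayed condition is nonetheless the right one.
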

\begin{proof}
    \label{eq:proof_corr}
    Before proving the corollary, let us first rewrite Equation \eqref{eq:Definition1} as follows to make the proof easier:
\begin{align}
    \label{eq:corr_def2}
    \mathrm{GmaIoU}(\hat{B}, P, M) = \frac{|P|}{|\hat{B} \cup P|} \cancelto{\mathrm{constant}\, \mathrm{wrt.}\, P}{\frac{|\hat{B} \cap M|}{|M|}},
\end{align}
Here, we see that changing $P$ in GmaIoU only affects $\frac{|P|}{|\hat{B} \cup P|}$ and therefore, it is sufficient if we just focus on how this expression changes when $P$ changes: To prove the corollary, we extend the polygon $P$ by a single pixel $x$ (as changing $P$ from $M$ to $B$ grows the polygon) and derive the resulting $|P|/|\hat{B} \cup P|$: Let us use $P^{+}=P\cup x$ such that $x \notin P$ is a single pixel. 

There are two different cases for $x$:\\
    \par \underline{\textbf{Case (i): $x\in \hat{B}$}}\\
        \begin{align}
            \frac{|P^+|}{|\hat{B}\cup P^+|} &= \frac{|P|+1}{|\hat{B}|+(|P|+1) - (|\hat{B}\cap P)| + 1}, \\
            &= \frac{|P|+1}{|\hat{B} \cup P|} > \frac{|P|}{|\hat{B} \cup P|}.
        \end{align}
    \par \underline{\textbf{Case (ii): $x \notin  \hat{B}$}}\\
        \begin{align}
            \frac{|P^+|}{|\hat{B} \cup P^+|} &= \frac{|P|+1}{|\hat{B}|+(|P|+1) - |\hat{B}\cap P|},\\
            &=\frac{|P|+1}{|\hat{B} \cup P| + 1} \geq \frac{|P|}{|\hat{B} \cup P|}.
        \end{align}
Consequently, we can express the relationship in $M$ and $B$ as: $B = M \cup M'$ such that $|M| = 0$ or $|M| \geq 1$. If $|M| = 0$, then $B = M$ and  $\mathrm{GmaIoU} (\hat{B}, B, M) = \mathrm{GmaIoU} (\hat{B}, M, M)$. As for $|M| \geq 1$, considering two cases above $\mathrm{GmaIoU} (\hat{B}, B, M) \geq \mathrm{GmaIoU} (\hat{B}, M, M)$ holds; completing the proof.
\end{proof}

A visual interpretation of the comparison between these two measures, formally shown in Corollary \ref{corr:m_vs_b}, is shown in Figure \ref{fig:histograms}(c) where we see that two different settings of GmaIoU correspond to two different proximity measures. This flexibility of GmaIoU enables us to investigate how to consider ground truth masks for assigning the anchors to ground truth boxes.

%

\subsection{The Challenge to Utilize Masks for Assignment and An Efficient Computation Algorithm for GmaIoU}
\label{subsec:Algorithm}
On each input image, a typical anchor-based instance segmentation method defines thousands of anchor boxes (e.g., $\sim 19.2K$ for YOLACT for images with a size of $550 \times 550$) in an effort to  capture all ground-truth objects in the image, which are in different locations, scales and aspect ratios. Accordingly, an efficient mechanism assigning these large number of anchors with the ground truth objects is necessary; otherwise the training time can easily increase in orders of magnitude. Consequently, devising an efficient algorithm is also a major challenge while incorporating the masks into the assignment algorithm considering the fact that the masks of the objects can have arbitrary shapes unlike their bounding boxes. Therefore, in this section, we introduce an algorithm to efficiently compute GmaIoU for its $P=B$ and $P=M$ cases corresponding to $\mathrm{maIoU} (\hat{B}, B, M)$ and $\mathrm{IoU} (\hat{B}, M)$ respectively as outlined in Section \ref{subsec:maIoU}.

We first investigate the additional terms to compute $\mathrm{maIoU} (\hat{B}, B, M)$ and $\mathrm{IoU} (\hat{B}, M)$ (Eq. \eqref{eq:Definition3}) compared to the conventional IoU relying only on bounding boxes. $\mathrm{maIoU} (\hat{B}, B, M)$, as defined in Eq. \ref{eq:Definition2} requires (i) $|M|$, the number of mask pixels and (ii) $|\hat{B} \cap M|$, the number of mask pixels in the intersecting with the anchor box $\hat{B}$. As for $\mathrm{IoU} (\hat{B}, M)$ , $|\hat{B} \cap M|$ as the nominator of Eq. \eqref{eq:Definition3} is again necessary. The union expression in the denominator of Eq. \eqref{eq:Definition3} can be rewritten as $|\hat{B} \cup M|=|\hat{B}|+|M|-|\hat{B} \cap M|$; therefore another additional term is $|M|$ similar to $\mathrm{maIoU} (\hat{B}, B, M)$. To summarize; finding a way to compute $|M|$ and $|\hat{B} \cap M|$ efficiently will enable us to use both $\mathrm{maIoU} (\hat{B}, B, M)$ and $\mathrm{IoU} (\hat{B}, M)$ for anchor assignment.

To do so, we leverage \textit{integral images} \cite{IntegralImage} on the binary ground truth masks, $M$. In particular, given $M$ corresponding to an object with a size of $m \times n$, we obtain the integral image of $M$ denoted by $\Theta^M$ which is an $(m+1) \times (n+1)$ matrix. As a feature of integral image, $\Theta^M$ represents the total number of mask pixels above and to the left of each pixel. Then, given $\Theta^M$ and denoting its $(i,j)^{th}$ element by $\Theta^M_{i,j}$, we can recover the two terms that we are interested in to compute $\mathrm{maIoU} (\hat{B}, B, M)$ and $\mathrm{IoU} (\hat{B}, M)$ as follows:
\begin{compactitem}
    \item $|M|=\Theta^M_{m+1,n+1}$, that is the last element stores the total number of mask pixels, and
    \item $|\hat{B} \cap M|=\Theta^M_{x_2+1,y_2+1}+\Theta^M_{x_1,y_1}-\Theta^M_{x_2+1,y_1}-\Theta^M_{x_1,y_2+1}$ where the pairs $(x_1, y_1)$ and $(x_2,y_2)$ determines the top-left and bottom-right points of the anchor box $\hat{B}$ such that $x_2>x_1$ and $y_2>y_1$.
\end{compactitem}
Therefore, having computed the integral image for each object, obtaining $|M|$ and $|\hat{B} \cap M|$ require only constant time; enabling us to incorporate both $\mathrm{maIoU} (\hat{B}, B, M)$ and $\mathrm{IoU} (\hat{B}, M)$ into the anchor assignment procedure. We present the algorithm to compute these quantities in Alg. \ref{alg:MaskAwareIoU}.
\begin{algorithm}
\caption{The algorithm for efficiently calculating Generalized mask-Aware IoU for $P=B$ ($\mathrm{maIoU} (\hat{B}, B, M)$) or $P=M$ ($\mathrm{IoU} (\hat{B}, M)$). \label{alg:MaskAwareIoU}}
\footnotesize
\begin{algorithmic}[1]
\Procedure{GeneralizedMaskAwareIoU}{$\hat{B}, P, M$}
\State $\Theta^M \leftarrow $ integral image of $M$ st. $\Theta^M_{i,j}$ is $(i,j)^{th}$ element of $\Theta^M$ 
\State $|M|=\Theta^M_{m+1,n+1}$ 
\State $|\hat{B} \cap M|=\Theta^M_{x_2+1,y_2+1}+\Theta^M_{x_1,y_1}-\Theta^M_{x_2+1,y_1}-\Theta^M_{x_1,y_2+1}$
\If{$P=B$}
\State $\mathrm{MOB}(B,M)= |M|/|B|$ for ground truth $B$
\State Compute $|\hat{B} \cup B|$
\State \textbf{return} $\mathrm{maIoU} (\hat{B}, B, M)$ (Eq. \eqref{eq:Definition2})
\ElsIf{$P=M$}
\State Compute $|\hat{B}|$
\State $|\hat{B} \cup M|=|\hat{B}|+|M|-|\hat{B} \cap M|$
\State \textbf{return} $\mathrm{IoU} (\hat{B}, M)$ (Eq. \eqref{eq:Definition3})
\EndIf
\EndProcedure
\end{algorithmic}
\end{algorithm}



\subsection{Incorporating GmaIoU into ATSS Assigner}
\label{subsec:ATSS}
Adaptive Training Sample Selection (ATSS) \cite{ATSS} is a common SOTA anchor assignment procedure used in object detectors. Among its benefits compared to using a fixed IoU assigner are (i) ATSS yields better performance in object detectors; and (ii) broadly speaking, it simplifies the anchor design thanks to using a single anchor from each location on the image instead of requiring up to nine anchors per location by the fixed IoU threshold assigner \cite{FocalLoss}. Basically, ATSS assigner consists of three steps: (i) considering the distance between the centers of the anchor and the ground truth box, top-$k$ anchors are selected as ``candidates'' for positive assignment where typically $k=9$; (ii) an adaptive IoU threshold is computed based on the statistics of these candidates and the anchors with IoU less than this adaptive IoU are removed from the candidate set; and finally (iii) the candidate anchors with centers not residing in the ground truth bounding box are filtered out. Having completed these three steps, the remaining anchors in the candidate set are the positive ones (i.e., match with an object) and all other anchors are negatives to be predicted as background. Incorporating our GmaIoU into ATSS is straightforward: In step (ii), we simply replace the IoU-based adaptive thresholding by our GmaIoU-based adaptive thresholding.

\section{Experiments}
\label{sec:Experiments}

 In this section, we evaluate the effectiveness of our GmaIoU by incorporating it into YOLACT \cite{yolact}, an anchor-based state-of-the-art (SOTA) real-time instance segmentation method with a good trade-off between efficiency and performance. Specifically, Section \ref{subsec:Ablation} presents ablation experiments comparing different configurations of GmaIoU and IoU variants.  Section \ref{subsec:GmaYOLACT} builds GmaYOLACT detector by using several improvement strategies for accuracy by keeping the real-time nature of the method. Finally, Section \ref{subsec:SOTA} compares our approach with SOTA instance segmentation methods. 

\subsection{Experiment and Implementation Details}
\noindent \textbf{Dataset.} We employ the commonly-used COCO \textit{trainval} set \cite{COCO} (115K images) for training and COCO \textit{minival} set (5K images) unless otherwise stated.

\noindent \textbf{Performance Measures.} In general, we rely on the common AP-based performance measures (higher-better) including (i) COCO-style AP denoted by AP; (ii) APs where true positives are validated from IoUs of 0.50 and 0.75 ($\mathrm{AP_{50}}$ and $\mathrm{AP_{75}}$); (iii) APs designed for small, medium and large objects ($\mathrm{AP_{S}}$, $\mathrm{AP_{M}}$ and $\mathrm{AP_{L}}$ respectively). Furthermore, we leverage optimal Localisation Recall Precision (oLRP) Error \cite{LRP, LRP_tpami} (lower-better) enabling us to provide more insights. 

\noindent \textbf{Implementation Details.} We incorporate our GmaIoU assigner and GmaYOLACT detector into the mmdetection framework \cite{mmdetection} by following the standard configurations of YOLACT and ATSS anchor assignment strategy.
%
%
While using ATSS with any IoU variant, we find it useful to keep the classification, box regression and semantic segmentation loss weights ($1.0$, $1.5$ and $1.0$ respectively) as they are but we increase the mask prediction loss weight from $6.125$ to $8$. 
%
%
Unless otherwise stated, we train all models with a batch size of $32$ using $4$ Nvidia GeForce GTX Titan X GPUs.
%
We use ResNet-50 \cite{ResNet} as the backbone; resize images to one of the $400 \times 400$, $550 \times 550$ or $700 \times 700$ during training and testing following Bolya et al. \cite{yolact}.

\begin{table*}
    \centering
    \small
    \setlength{\tabcolsep}{0.35em}
    \caption{Comparison of different assigners and IoU variants on YOLACT. Considering the shapes of the objects, our ATSS w. our maIoU consistently outperforms its counterparts.}
    \label{tab:minival}
    \resizebox{\textwidth}{!}{\begin{tabular}{|c|c|c|c|c|c|c|c|c|} \hline
        Scale & Assigner & $\mathrm{AP} \uparrow$ & $\mathrm{AP_{50}} \uparrow$ & $\mathrm{AP_{75}} \uparrow$ & $\mathrm{AP_{S}} \uparrow$ & $\mathrm{AP_{M}} \uparrow$ & $\mathrm{AP_{L}} \uparrow$ & $\mathrm{oLRP} \downarrow$ \\ \hline \hline
    \multirow{5}{*}{400}&fixed IoU threshold&$24.8$&$42.4$&$25.0$&$\mathbf{7.3}$&$26.0$&$42.0$&$78.3$ \\
    &ATSS w. IoU&$25.3$&$43.5$&$25.5$&$6.8$&$27.3$&$43.8$&$77.7$ \\
    &ATSS w. DIoU&$25.4$&$43.6$&$25.2$&$7.2$&$27.1$&$43.4$&$77.7$ \\
    &ATSS w. GIoU&$25.1$&$42.7$&$25.3$&$7.0$&$26.8$&$41.8$&$78.0$ \\ \cline{2-9}
    &ATSS w. GmaIoU, $P=B$ (Ours)&$26.1$&$\mathbf{44.3}$&$26.3$&$7.2$&$28.0$&$44.3$&$\mathbf{77.1}$\\
    &ATSS w. GmaIoU, $P=M$ (Ours) &$\mathbf{26.2}$&$44.0$&$\mathbf{26.6}$&$5.6$&$\mathbf{28.6}$&$\mathbf{48.1}$&$\mathbf{77.1}$ \\
    \hline
    \hline 
    \multirow{5}{*}{550} &fixed IoU threshold& $28.5$ & $47.9$ & $29.4$ & $11.7$ & $31.8$ & $43.0$ & $75.2$ \\
    & ATSS w. IoU&$29.3$& $49.2$ & $30.2$ & $11.1$ & $33.0$ & $44.5$ & $74.5$ \\
    &ATSS w. DIoU&$29.5$&$49.5$&$30.1$&$11.7$&$33.2$&$44.9$&$74.4$ \\
    &ATSS w. GIoU&$29.1$&$48.6$&$30.0$&$\mathbf{12.0}$&$32.2$&$43.3$&$74.7$ \\ \cline{2-9}
    &ATSS w. GmaIoU, $P=B$ (Ours)& $30.4$& $50.3$&$31.4$ &$11.5$&$33.9$&$46.3$&$73.7$ \\
    & ATSS w. GmaIoU, $P=M$ (Ours)& $\mathbf{30.7}$& $\mathbf{50.4}$ & $\mathbf{31.8}$ & $10.5$ & $\mathbf{34.0}$ & $\mathbf{50.2}$ & $ \mathbf{73.5}$ \\
    \hline
    \hline
    \multirow{5}{*}{700} &fixed IoU threshold&$29.7$&$50.0$&$30.4$&$14.2$&$32.8$&$43.7$&$74.3$\\
    & ATSS w. IoU&$30.8$&$51.8$&$31.2$&$14.1$&$35.0$&$44.0$&$73.3$\\
    &ATSS w. DIoU&$30.9$&$51.9$&$31.7$&$14.0$&$35.4$&$44.0$&$73.3$ \\
    &ATSS w. GIoU&$30.1$&$50.7$&$31.0$&$14.0$&$33.8$&$43.1$&$74.0$ \\ \cline{2-9}
    &ATSS w. GmaIoU, $P=B$ (Ours)&$31.8$&$52.8$&$32.8$&$\mathbf{14.7}$&$35.6$&$45.7$&$72.5$\\ 
    &ATSS w. GmaIoU, $P=M$ (Ours) &$\mathbf{32.2}$&$\mathbf{53.2}$&$\mathbf{33.1}$&$13.2$&$\mathbf{35.9}$&$\mathbf{49.4}$&$\mathbf{72.3}$ \\
    \hline
    
    \end{tabular}}
\end{table*}

\subsection{Ablation Experiments}
\label{subsec:Ablation}

In this section, we demonstrate that GmaIoU consistently outperforms assigners using different IoU variants which ignore the shape of the object. Furthermore, we show that our Alg. \ref{alg:MaskAwareIoU} enables the efficient calculation of GmaIoU making it feasible to use during training.

\noindent \textbf{Using ATSS with IoU variants.} We first build a stronger baseline in which we replace the fixed IoU threshold assigner in YOLACT by ATSS with IoU and its different variants including DIoU \cite{DIoULoss} and GIoU \cite{GIoULoss}. We observe in Table \ref{tab:minival} that using ATSS outperforms the conventional fixed IoU assigner by $0.6 - 1.2$ mask AP across different scales.

\noindent \textbf{Using ATSS with our GmaIoU.} We now use our GmaIoU with ATSS and enable the assignment method to consider the shapes of the objects as opposed to the existing anchor assignment methods. We test both configurations of GmaIoU ($P=B$ and $P=M$) introduced in Section \ref{sec:miou} while extending maIoU to GmaIoU. We note that the case $P=M$ is derived and computed thanks to our generalization and efficient algorithm respectively while the case $P=B$ is simply maIoU \cite{maIoU}. Table \ref{tab:minival} suggests that our generalization is, in fact, useful compared to relying on the standard maIoU ($P=B$) observing that $P=M$ outperforms $P=B$ across all scales. Furthermore, our GmaIoU with $P=M$ (i) outperforms the fixed IoU assigner by $1.4$, $2.2$, and $2.5$ mask APs for 400, 500 and 700 scales respectively, (ii) improves ATSS w. IoU variants by $\sim 1.0$ mask AP in all scales. Major improvements of GmaIoU are observe especially on (1) models trained by larger scales (700 vs. 400 in Table \ref{tab:minival}) and (2) larger objects ($\mathrm{AP_{L}}$ vs. $\mathrm{AP_{S}}$). These results are inline with the high-level idea of our GmaIoU (Figure \ref{fig:Teaser}) because the shape of the object becomes more important than its bounding box when its size increases.

\noindent \textbf{Computing GmaIoU Efficiently.} `Brute-force' computation of GmaIoU for every anchor-ground truth pair is impractical during training, as it would take $\sim 3$ months to train a single model with $41.89$ sec/iteration rate. Table \ref{tab:runtime} suggests that our efficient algorithm (Alg. \ref{alg:MaskAwareIoU}) yields a substantial gain ($\sim 70\times$) in the average iteration time to $0.59$ sec/iteration and enables the assignment algorithms to consider the shape of the object. Through this, we obtain an average iteration time similar to other standard assigners (Table \ref{tab:runtime}). We note that the computation of the case $P=M$ is very similar to the case $P=B$ for GmaIoU; hence the same conclusion follows for $P=M$ as well.

\begin{table*}
\RawFloats
\parbox{.38\linewidth}{
    \centering
    \small
    \setlength{\tabcolsep}{0.3em}
    \caption{Avg. iteration time ($t$) of assigners. While brute force maIoU computation is inefficient (Alg. \ref{alg:MaskAwareIoU} is \xmark); our Alg. \ref{alg:MaskAwareIoU} decreases $t$ by $\sim 70 \times$ and has similar $t$ with Fixed IoU Thr. and ATSS w. IoU.}
    \label{tab:runtime}
    \resizebox{.38\textwidth}{!}{\begin{tabular}{|c|c|c|} \hline
    Assigner& Alg. \ref{alg:MaskAwareIoU}& $t$ (sec.) \\ \hline
    Fixed IoU Thr.&N/A&$0.51$\\
    ATSS w. IoU&N/A&$0.57$ 
    \\
    ATSS w. GmaIoU, $P=B$&\xmark&$41.89$
    \\ \hline
    ATSS w. GmaIoU, $P=B$&\cmark&$0.59$
    \\
     \hline
    \end{tabular}}
}
\parbox{.61\linewidth}{
    \centering
    \small
    \setlength{\tabcolsep}{0.4em}
    \caption{ATSS w. GmaIoU (underlined) makes YOLACT more accurate and  $ \sim 25 \%$ faster mainly owing to less anchors. Thanks to this efficiency, we build maYOLACT-550 with $34.8$ AP and still larger fps than YOLACT. 
    }
    \label{tab:mayolact}
    \resizebox{.57\textwidth}{!}{\begin{tabular}{|c|c|c|c|c|c|} \hline
    \multicolumn{2}{|c|}{Method}&$\mathrm{AP}$&$\mathrm{AP^{box}}$& fps & Anchor \#  \\ \hline
    \multirow{8}{*}{\rotatebox{90}{\footnotesize{GmaYOLACT-550}}}&YOLACT-550&$28.5$&$30.7$&$28$&$\sim 19.2 K$ \\ \cline{2-6}
    &+ \underline{ATSS w. GmaIoU, $P=M$}&\underline{$30.7$}&\underline{$32.8$} &\underline{$\mathbf{32}$}&\underline{$\mathbf{\sim 6.4 K}$}   \\ 
    &+ Carafe FPN \cite{carafe} &$31.4$&$33.4$&$31$&$\mathbf{\sim 6.4 K}$  \\
    &+ DCNv2 \cite{DCNv2} & $33.7$ &$36.4$& $28$ &$\mathbf{\sim 6.4 K}$ \\
    &+ cosine annealing \cite{sgdr} &$34.7$&$37.5$&$28$&$\mathbf{\sim 6.4 K}$\\
    &+ more anchors &$34.7$&$37.9$&$27$&$\sim 12.8 K$\\
    &+ single GPU training &$35.5$&$38.7$&$27$&$\sim 12.8 K$\\
    &+ RS-Loss \cite{RSLoss} &$35.6$&$40.3$&$28$&$\sim 12.8 K$\\    
    \hline
    \end{tabular}}
}
\end{table*}

\subsection{GmaYOLACT Detector: Faster and Stronger}
\label{subsec:GmaYOLACT}
Our ATSS with GmaIoU assigner (underlined in Table \ref{tab:mayolact}) enables us to reduce the number of anchors ($\sim19.2K$ vs $\sim6.4K$) and makes YOLACT $\sim25 \%$ faster ($32$ fps) compared to the baseline YOLACT ($28$ fps)\footnote{Note that YOLACT implemented in mmdetection framework is slower than the official implementation by Bolya et al. \cite{yolact}, who reported $45$fps.}. This highlights the significance of the anchor design strategy for improving the efficiency of real-time instance segmentation models.
Our aim in this section is to improve the standard YOLACT through the integration of our GmaIoU and the recent advances, with the goal of making it competitive with the recent methods while preserving its real-time processing capability\footnote{We use 25fps as the cut-off for ``real-time'' following the common video signal standards (e.g. PAL \cite{pal} and SECAM \cite{secam}) and existing methods \cite{bpd, salientod, basnet, tinieryolo}.}. To achieve that, we  build the GmaYOLACT detector by incorporating the following improvement strategies:
\begin{compactitem}
\item Carafe-FPN \cite{carafe} as the upsampling operation in standard FPN \cite{FeaturePyramidNetwork},
\item Deformable convolutions \cite{DCNv2} in the backbone,
\item Cosine annealing \cite{cosineannealing} with an initial learning rate of $0.008$ which replaces the step learning rate decay,
\item Two anchor base scales of $4$ and $8$ for each pixel and double the number of anchors used by the standard ATSS,
\item Single GPU training following the baseline YOLACT \cite{yolact}, which we find very useful compared to training on multiple GPUs,
\item Rank \& Sort (RS) Loss \cite{RSLoss} as the training objective, resulting in a simple-to-tune model and significant improvement for detection performance.
\end{compactitem}

In our final design, we set the weight of the semantic segmentation head to $0.5$ and employ self-balancing of RS Loss for the other heads, hence we do not tune them. We set the learning rate to $0.008$ and the background removal threshold before  NMS to $0.55$ as we observed that RS Loss yields higher confidence scores compared to the conventional score-based loss functions.

Table \ref{tab:mayolact} demonstrates that without affecting the inference time, that is 28 FPS on a single Nvidia RTX 2080Ti GPU, our GmaYOLACT-550 detector improves the baseline YOLACT-550 significantly (by $7.1$ mask AP and $9.6$ box AP) and reaches $35.6$ mask AP and $40.4$ box AP. We also note that these results also improve upon maYOLACT detector \cite{maIoU}, the earlier version of GmaIoU detector, by around $ \sim 1$ mask AP and $2.5$ box AP (not included in Table \ref{tab:mayolact}).
\begin{table*}[hbt!]
    \centering
    \small
    \setlength{\tabcolsep}{0.2em}
    \caption{Comparison with SOTA on COCO \textit{test-dev}. Our maYOLACT-700 establishes a new SOTA for real-time instance segmentation.
    $^*$ implies our implementation for YOLACT with ATSS w.IoU.
    When a paper does not report a performance measure, N/A is assigned and we reproduce the performance using its repository for completeness (shown by $^\dagger$). }
    \label{tab:testdev}
    \resizebox{\textwidth}{!}{\begin{tabular}{|c|c|c|c|c|c|c|c|c|c|} \hline
        \multicolumn{2}{|c|}{Methods}&Backbone&$\mathrm{AP}$& $\mathrm{AP_{50}}$ & $\mathrm{AP_{75}}$ & $\mathrm{AP_{S}}$ & $\mathrm{AP_{M}}$ & $\mathrm{AP_{L}}$&Reference \\ \hline
    \multirow{6}{*}{\rotatebox{90}{fps $ < 25$}}
    &YOLACT-700 \cite{yolact} & ResNet-101&$31.2$&$50.6$&$32.8$&$12.1$&$33.3$&$47.1$&ICCV 19\\    
    &PolarMask \cite{polarmask}& ResNet-101&$32.1$&$53.7$&$33.1$&$14.7$&$33.8$&$45.3$&CVPR 20\\
    &PolarMask++ \cite{PolarMask-plus}& ResNet-101&$33.8$&$57.5$&$34.6$&$16.6$&$35.8$&$46.2$&TPAMI 21\\
    &RetinaMask \cite{retinamask}& ResNet-101&$34.7$&$55.4$&$36.9$&$14.3$&$36.7$&$50.5$&Preprint\\
    &Mask R-CNN \cite{tensormask}& ResNet-50&$36.8$&$59.2$&$39.3$&$17.1$&$38.7$&$52.1$&ICCV 17\\ 
    &TensorMask \cite{tensormask}& ResNet-101&$37.1$&$59.3$&$39.4$&$17.4$&$39.1$&$51.6$&ICCV 19\\ \hline \hline
    \multirow{15}{*}{\rotatebox{90}{fps $ \geq 25$}}&YOLACT-550 \cite{yolact} & ResNet-50&$28.2$&$46.6$&$29.2$&$9.2$&$29.3$&$44.8$&ICCV 19\\
    &YOLACT-550$^*$& ResNet-50&$29.7$&$49.9$&$30.7$&$11.9$&$32.4$&$42.7$&Baseline\\
    &Solov2-448 \cite{solov2} & ResNet-50 &$34.0$&$54.0$&$36.1$&N/A&N/A&N/A&NeurIPS 20\\
    &Solov2-448$^\dagger$ \cite{solov2-imp} & ResNet-50 &$34.0$&$54.0$&$36.0$&$10.3$&$36.3$&$53.8$&NeurIPS 20\\
    &YOLACT-550++ \cite{yolact-plus} & ResNet-50&$34.1$&$53.3$&$36.2$&$11.7$&$36.1$&$53.6$&TPAMI 20\\
    &YOLACT-550++ \cite{yolact-plus} & ResNet-101&$34.6$&$53.8$&$36.9$&$11.9$&$36.8$&$55.1$&TPAMI 20\\
    &CenterMask-Lite$^\dagger$ \cite{centermask-imp} & VoVNetV2-39 &$35.7$&$56.7$&$37.9$&$18.4$&$37.8$&$47.3$&CVPR 20\\
    &CenterMask-Lite \cite{centermask} & VoVNetV2-39 &$36.3$&N/A&N/A&$15.6$&$38.1$&$53.1$&CVPR 20\\
    &Solov2-512$^\dagger$ \cite{solov2-imp} & ResNet-50 &$36.9$&$57.5$&$39.4$&$12.8$&$39.7$&$\mathbf{57.1}$&NeurIPS 20\\
    &Solov2-512 \cite{solov2} & ResNet-50 &$37.1$&$57.7$&$39.7$&N/A&N/A&N/A&NeurIPS 20\\
    &SparseInst-448 \cite{SprarseInst} & R-50-d-DCN&$35.9$&$56.5$&$37.7$&$12.3$&$37.1$&$57.0$&CVPR 22\\	
    &SparseInst-608 \cite{SprarseInst} & R-50-d-DCN &$37.9$&$59.2$&$40.2$&$15.7$&$39.4$&$56.9$&CVPR 22\\
    \cline{2-10}
    &maYOLACT-550 \cite{maIoU} & ResNet-50&$35.2$ &$56.2$&$37.1$& $14.7$&$38.0$&$51.4$ &BMVC 21\\
    &maYOLACT-700  \cite{maIoU} & ResNet-50&$37.7$ &$59.4$&$39.9$& $18.1$&$40.8$&$52.5$ & BMVC 21\\
    \cline{2-10}
    &GmaYOLACT-550 (\textbf{Ours}) & ResNet-50 &$35.9$ &$57.0$&$37.9$& $14.8$&$38.4$&$53.2$ &\\
    &GmaYOLACT-700 (\textbf{Ours}) & ResNet-50&$\mathbf{38.7}$&$\mathbf{61.0}$&$\mathbf{41.2}$&$\mathbf{19.0}$&$\mathbf{41.9}$&$53.7$&\\
    \hline
    \end{tabular}}
\end{table*}
\subsection{Comparison with State-of-the-art (SOTA)}
\label{subsec:SOTA}
We compare our GmaYOLACT with state-of-the-art methods on COCO \textit{test-dev} split in Table \ref{tab:testdev}.

\noindent \textbf{Comparison with YOLACT variants.} Our GmaYOLACT-550 reaches $35.9$ mask AP and surpasses all YOLACT variants, including those with larger backbones (e.g. YOLACT-550++ with ResNet-101); larger scales (e.g. YOLACT-700); and maYOLACT-550 detector proposed in the initial version of this work \cite{maIoU}. Unlike YOLACT++ \cite{yolact-plus} which is $\sim 25\%$ slower than YOLACT (as seen in Table 6 in Bolya et al. \cite{yolact-plus}), GmaYOLACT-550 has similar inference time with YOLACT-550 and still has $\sim 6$ mask AP increase on COCO \textit{test-dev}, achieving a mask AP of $35.9$.

\noindent \textbf{Comparison with real-time methods.} Without utilizing multi-scale training as in Solov2 \cite{solov2} or a specially designed backbone like in CenterMask \cite{centermask}, our GmaYOLACT-700 still outperforms existing real-time counterparts with a mask AP of $38.7$ at 25fps by at least around $\sim 1$ mask AP. 
Additionally, our top model achieves $61.0$ in terms of the commonly used $\mathrm{AP_{50}}$ metric and has a gap of $\sim 2$ mask $\mathrm{AP_{50}}$ points when compared to the SparseInst as its closest real-time counterpart.

\noindent \textbf{Comparison with other methods.} Our GmaYOLACT is also a strong competitor against relatively slower methods, as shown in Table \ref{tab:testdev}. It outperforms PolarMask++ \cite{PolarMask-plus}, RetinaMask \cite{retinamask}, Mask R-CNN \cite{MaskRCNN}, and TensorMask \cite{tensormask} while being faster. For example, on a RTX 2080Ti GPU, our GmaYOLACT-700 delivers approximately 2x more throughput at 25fps and a nearly 4 mask AP improvement ($37.7$ AP - Table \ref{tab:testdev}) compared to PolarMask++ on ResNet-101 with a test time of 14 fps. It is also about 8x faster than TensorMask using ResNet-101 (i.e. $\sim 3$fps) while maintaining similar performance.
\section{Conclusion}
\label{sec:Conclusion}
We presented GmaIoU, a generalized version of our previous maIoU which provides the flexibility of using either ground-truth mask and its box, or mask alone. For training instance segmentation methods, we used the GmaIoU to designate anchors as positive or negative, utilizing the shape of objects indicated only by the ground-truth segmentation masks. We showed that integrating our GmaIoU with ATSS boosts the runtime performance of the model. With this increased efficiency, we were able to attain SOTA results in real-time.

\section*{Acknowledgments}
This work was partially supported by the Scientific and Technological Research Council of Turkey (T\"UB\.ITAK) through the project titled “Addressing Class Imbalance in Visual Recognition Problems by Measuring Class Imbalance and Using Epistemic Uncertainty (DENGE)” (project no. 120E494).  We also gratefully acknowledge the computational resources kindly provided by TÜBİTAK ULAKBİM High Performance and Grid Computing Center (TRUBA) and METU Robotics and Artificial Intelligence Center (ROMER). Dr. Akbas is supported by the ``Young Scientist Awards Program (BAGEP)'' of Science Academy, Turkey.



\bibliographystyle{elsarticle-num} 
\bibliography{main}





\end{document}